




\documentclass[sigconf]{aamas} 

\pdfoutput=1


\usepackage{balance} 



\setcopyright{ifaamas}
\acmConference[AAMAS '24]{Proc.\@ of the 23rd International Conference
on Autonomous Agents and Multiagent Systems (AAMAS 2024)}{May 6 -- 10, 2024}
{Auckland, New Zealand}{N.~Alechina, V.~Dignum, M.~Dastani, J.S.~Sichman (eds.)}
\copyrightyear{2024}
\acmYear{2024}
\acmDOI{}
\acmPrice{}
\acmISBN{}



\acmSubmissionID{1037}


\title[AAMAS-2024 Formatting Instructions]{Disentangling Policy from Offline Task Representation Learning via Adversarial Data Augmentation}


\author{Chengxing Jia$^*$}
\affiliation{
  \institution{National Key Laboratory for Novel Software Technology}
  \institution{School of Artificial Intelligence}
  \city{Nanjing University, Nanjing}
  \country{China}
  \city{Polixir Technologies, Nanjing}
  \country{China}
}
\email{jiacx@lamda.nju.edu.cn}

\author{Fuxiang Zhang$ ^*$}
\affiliation{
  \institution{National Key Laboratory for Novel Software Technology}
  \institution{School of Artificial Intelligence}
  \city{Nanjing University, Nanjing}
  \country{China}
}
\email{zhangfx@lamda.nju.edu.cn}

\author{Yi-Chen Li}
\affiliation{
  \institution{National Key Laboratory for Novel Software Technology}
  \institution{School of Artificial Intelligence}
  \city{Nanjing University, Nanjing}
  \country{China}
  \city{Polixir Technologies, Nanjing}
  \country{China}
}
\email{liyc@lamda.nju.edu.cn}

\author{Chen-Xiao Gao}
\affiliation{
  \institution{National Key Laboratory for Novel Software Technology}
  \institution{School of Artificial Intelligence}
  \city{Nanjing University, Nanjing}
  \country{China}
}
\email{gaocx@lamda.nju.edu.cn}

\author{Xu-Hui Liu}
\affiliation{
  \institution{National Key Laboratory for Novel Software Technology}
  \institution{School of Artificial Intelligence}
  \city{Nanjing University, Nanjing}
  \country{China}
  \city{Polixir Technologies, Nanjing}
  \country{China}
}
\email{liuxh@lamda.nju.edu.cn}

\author{Lei Yuan}
\affiliation{
  \institution{National Key Laboratory for Novel Software Technology}
  \institution{School of Artificial Intelligence}
  \city{Nanjing University, Nanjing}
  \country{China}
  \city{Polixir Technologies, Nanjing}
  \country{China}
}
\email{yuanl@lamda.nju.edu.cn}

\author{Zongzhang Zhang}
\affiliation{
  \institution{National Key Laboratory for Novel Software Technology}
  \institution{School of Artificial Intelligence}
  \city{Nanjing University, Nanjing}
  \country{China}
}
\email{zzzhang@nju.edu.cn}

\author{Yang Yu$^\dagger$}
\affiliation{
  \institution{National Key Laboratory for Novel Software Technology}
  \institution{School of Artificial Intelligence}
  \city{Nanjing University, Nanjing}
  \country{China}
  \city{Polixir Technologies, Nanjing}
  \country{China}
}
\email{yuy@nju.edu.cn}


\begin{abstract}
\footnotetext{These authors contributed equally.}
\footnotetext{Corresponding author.}

Offline meta-reinforcement learning (OMRL) proficiently allows an agent to tackle novel tasks while solely relying on a static dataset. For precise and efficient task identification, existing OMRL research suggests learning separate task representations that be incorporated with policy input, thus forming a context-based meta-policy. A major approach to train task representations is to adopt contrastive learning using multi-task offline data. The dataset typically encompasses interactions from various policies (i.e., the behavior policies), thus providing a plethora of contextual information regarding different tasks. Nonetheless, amassing data from a substantial number of policies is not only impractical but also often unattainable in realistic settings. Instead, we resort to a more constrained yet practical scenario, where multi-task data collection occurs with a limited number of policies. We observed that learned task representations from previous OMRL methods tend to correlate spuriously with the behavior policy instead of reflecting the essential characteristics of the task, resulting in unfavorable out-of-distribution generalization. To alleviate this issue, we introduce a novel algorithm to disentangle the impact of behavior policy from task representation learning through a process called adversarial data augmentation. Specifically, the objective of adversarial data augmentation is not merely to generate data analogous to offline data distribution; instead, it aims to create adversarial examples designed to confound learned task representations and lead to incorrect task identification. Our experiments show that learning from such adversarial samples significantly enhances the robustness and effectiveness of the task identification process and realizes satisfactory out-of-distribution generalization. The results in MuJoCo locomotion tasks demonstrate that our approach surpasses other OMRL baselines across various meta-learning task sets.
\end{abstract}



\keywords{Offline reinforcement learning; Meta-reinforcement learning}



\usepackage{xr}
\usepackage{bm}
\usepackage{makecell}
\usepackage{multirow}
\usepackage{algorithm}
\usepackage{algorithmic}
\usepackage{wrapfig}
\usepackage{lipsum,booktabs}
\usepackage{footnote}

\theoremstyle{plain}
\newtheorem{theorem}{Theorem}[section]

\theoremstyle{definition}
\newtheorem{definition}[theorem]{Definition}
\newtheorem{assumption}[theorem]{Assumption}
\theoremstyle{remark}

\usepackage{xcolor}

\newcommand{\BibTeX}{\rm B\kern-.05em{\sc i\kern-.025em b}\kern-.08em\TeX}





\makeatletter
\gdef\@copyrightpermission{
	\begin{minipage}{0.3\columnwidth}
		\href{https://creativecommons.org/licenses/by/4.0/}{\includegraphics[width=0.90\textwidth]{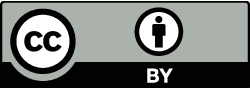}}
	\end{minipage}\hfill
	\begin{minipage}{0.7\columnwidth}
		\href{https://creativecommons.org/licenses/by/4.0/}{This work is licensed under a Creative Commons Attribution International 4.0 License.}
	\end{minipage}
	\vspace{5pt}
}
\makeatother

\begin{document}


\pagestyle{fancy}
\fancyhead{}


\maketitle 


\section{Introduction}

In recent years, reinforcement learning (RL) has seen substantial progress, primarily in situations where abundant interactive data is accessible for the learning of policies~\citep{alphastar, alphazero, rlcyclegan}. However, the practical implementation of RL in fields such as robotics~\citep{DBLP:conf/icra/JohanninkBNLKLO19} and healthcare~\citep{gottesman2019guidelines} remains constrained by persistent challenges relating to data efficiency and generalization~\citep{introrl, generalization_survey}, especially when environmental interaction becomes costly or poses potential risks. Offline RL~\citep{levine_offlinerl_survey, bcq, brac, bear, cql, iql, prdc}, proposing to learn policy contexts from a pre-existing environmental dataset, offers a viable solution to address the safety concerns and sample efficiency of RL. In parallel, meta-RL~\citep{maml, rl2, varibad, pearl} aims to train generalizable policies, adaptable to various tasks sharing common structural aspects in terms of environmental dynamics and rewards.

As the intersection of offline RL and meta-RL, Offline Meta-RL (OMRL)~\citep{omrl_survey, metadiff, taskinf, omoff2on} garners growing recognition due to its potential to enhance both data efficiency and generalization. OMRL usually relies on a multi-task offline dataset collected by different behavior policies to train a meta-policy that can effectively adapt to various tasks within the range. 
Prior OMRL works \citep{mbml, smac, focal} focus on context-based methods using a separate context encoder to extract task representations from context data. The context encoder learning is crucial for task identification and meta task generalization.
Although current OMRL methods show tremendous potential on generalization to unseen tasks without the requirement of real-time interactions, they usually posit a comprehensive coverage of the offline dataset~\citep{focal, corro}, i.e., data from each task is collected by diverse behavior policies. Consequently, the context encoder, employed for task identity identification, can naturally develop a robust manner from diverse data. Nevertheless, in instances where interactions entail substantial risk or cost, such a presumption or condition lacks realism. We, therefore, contemplate a more feasible scenario where the data emerges from a limited number of policies, resulting in poor data coverage for task representation extraction. When deployed in tasks, the meta-policy may encounter contexts that deviate from the trained dataset, resulting in out-of-distribution scenarios, and subsequently, a failure to generalize effectively for the given tasks.


Recent studies integrate contrastive learning to optimize task representation learning, such as FOCAL \citep{focal}, proposing a more stable contrastive learning objective, and CORRO \citep{corro}, adopting a bi-level structure to enhance robustness. When conducting experiments in such a poor-coverage dataset, we find that task representations through naive contrastive learning correlate spuriously with behavior policies from the dataset rather than recognizing task characteristics. The reason comes from the fact that the state-action sequence, which composes the context data, is formed not only by the environment nature but also by the decision from behavior policies. In this case, the context encoder is prone to fit the properties of behavior policies to output task representations. Such a manner severely degrades policy performance as the context encoder may treat data collected by different policies as different tasks and vice versa. 
While existing methods develop various contrastive learning strategies to comprehend task representations~\citep{focal, corro} that better outline task properties, the spurious relationship still remains since these methods do not account for the effect of behavior policies. 
Therefore, employing a task representation learning methodology that can disentangle behavior policies from learned representations would significantly enhance out-of-distribution generalization, subsequently facilitating adaptation to a spectrum of unseen tasks.

To disentangle the effect of behavior policy from task representations, we hope the context encoder to solely capture task-relevant data from the given context. Specifically, we find that this spurious relationship is amplified primarily when the task variation involves different environment transitions rather than reward functions. Since rewards are independently computed and will not affect subsequent trajectories, prior research can effectively address it by sharing data and relabeling rewards across tasks~\citep{mbml, cds}. Conversely, the variation of transition functions will result in different context data coupled with behavior policies, which cannot be simply relabeled. Consequently, our focus shifts to establishing a mapping process from the transition information into acquired task representations.

To eliminate the impact of behavior policies in original offline data, we introduce adversarial data augmentation to assist task representation learning. Different from traditional data augmentation approaches that usually generate in-distribution data, the proposed adversarial data augmentation is intended to reshape data distribution from the original offline dataset so that the bias induced by behavior policies can be removed. To be more specific, the data augmentation process tries to generate the most indistinguishable interaction data to confound the context encoder within an adversarial learning objective. We adopt a model-based RL approach to generate such data without environment interactions via multiple pretrained dynamics models and a particular adversarial policy for data collection. Using augmented training data, the context encoder learns to marginalize the effect of behavior policies and correctly identify tasks based on environmental characteristics. Our results on several meta-RL task sets from locomotion benchmarks demonstrate that our OMRL method with adversarial data augmentation exhibits definite abilities to distinguish tasks and significantly outperforms other OMRL baselines even when encountering out-of-distribution context data. 


\section{Preliminaries}

\subsection{Offline Meta-RL}

Reinforcement learning (RL) tasks can be formulated as Markov decision processes (MDPs)~\cite{sutton2018reinforcement} $M = (\mathcal{S}, \mathcal{A}, T, r, d_0, \gamma)$, where $\mathcal{S}$ is the state space, $\mathcal{A}$ is the action space, $T(s'|s,a)$ is the transition function, $r(s, a)$ is the reward function, $d_0(s)$ is the initial state distribution, and $\gamma \in [0,1)$ is the discount factor. In state $s$, an agent takes action $a\sim \pi(\cdot\mid s)$ according to its policy $\pi(a\mid s)$ and results in the next state $s'\sim T(\cdot \mid s, a)$ and the reward $r(s, a)$. The objective of RL is to maximize the cumulative discounted reward (a.k.a. return) as follows:
\begin{equation*}
\max_{\pi} R_M(\pi) = \mathbb{E}_{s_0 \sim d_0, a_t \sim \pi(\cdot \mid s_t), s_{t+1}\sim T(\cdot \mid s_t, a_t)}\sum\nolimits_{t=0}^{\infty}\gamma^t r(s_t, a_t). 
\end{equation*}

Meta-RL considers deploying a meta-policy that can perform well over a task distribution $P(M)$, where the meta-policy is trained in a task set and is deployed for tasks that may be unseen during the meta-train period. Offline meta-RL (OMRL) then extends this concept to offline RL settings, where the meta-policy should be learned from a static dataset $\{D_i\}_{i=1}^n$ containing interactions from different tasks $M_1, M_2, \ldots, M_n$. Every single-task data $D_i$ is collected by some unknown behavior policies, which may be trained with modern single-task RL algorithms, for example, TD3~\citep{td3} and SAC~\cite{sac} for MuJoCo locomotion tasks~\citep{mujoco}. The goal of OMRL is to learn a meta-policy $\pi$ that maximizes the cumulative discounted reward of each task under the task distribution $P(M)$ as follows: 
\begin{equation*}
    \max_{\pi} \mathbb{E}_{M\sim P(\cdot)} \left[ R_M(\pi) \right].
\end{equation*}

It is worth noting that OMRL does not posit the number of behavior policies for single-task data. Traditionally, previous approaches~\cite{focal, corro} adopt a variety of policy checkpoints to collect interaction data for each task, resulting in a diverse data distribution for task identification. Such comprehensive data coverage may be plausible to alleviate the data bias induced by specific behavior policies. However, using exhaustive policies to collect data is not practical for realistic scenarios, which, in most cases, is what OMRL methods should be applied to. 

\subsection{Task Representation Learning}
\label{subsec:repr-learning}

Task identification is vital for meta-RL to realize fast adaptation to unseen tasks~\citep{metarl-survey}. Here we focus on context-based meta-RL methods, which learn a meta-policy $\pi(a\mid s, z)$ that is explicitly conditioned on a task representation $z$. It additionally adopts a context encoder $\phi(z\mid x)$ to extract task representations $z$ from context data $x$. The context data $x$, which is often a transition tuple or a sub-trajectory of multiple interaction steps, can be collected from an arbitrary policy. This feature enables zero-shot generalization of the meta-policy when providing pre-collected or bootstrapped context data without extra updates during the adaptation phase. 


Although online meta-RL often trains the context encoder and the meta-policy jointly~\citep{pearl}, in OMRL settings they are typically learned separately with different objectives to stabilize the training process. From the perspective of information theory, an ideal context encoder should maximize the mutual information $I(z;M)$ between the task representation $z$ and the corresponding MDP $M$. As $I(z;M)$ is not directly tractable, we proceed by optimizing an InfoNCE objective~\citep{infonce}, which is a lower bound of the mutual information metric as proved by~\citep{corro}:
\begin{equation}
    \label{eq:infonce}
    \begin{aligned}
        I(z;M) \ge \mathbb{E}_{M \sim P(\cdot), z, z^* \sim \phi(\cdot \mid x_M)} \log \frac{\exp (S(z, z^*))}{\Phi(z)} + C, \\
        \text{where } \Phi(z) = \sum\nolimits_{M' \sim P(\cdot), z' \sim \phi(\cdot \mid x_{M'}) } \exp(S(z,z')).
    \end{aligned}
\end{equation}
We denote $x_M$ and $x_{M'}$ as context data from tasks $M$ and $M'$ respectively, and $S(\cdot, \cdot)$ as a score function which measures the similarity between two representations, e.g., their inner product or cosine similarity. $C$ is a constant that relates to the task distribution $P(M)$. 

An intuitive understanding of maximizing the lower bound in Equation~\eqref{eq:infonce} is to make representations of the same task similar while keeping those of different tasks apart. Nevertheless, naively applying this objective may fail to establish the correct relationship between the characteristics and representation of the tasks in the offline setting, because $x_M$ is sampled from the offline dataset and influenced by the behavior policy and task $M$ jointly. When each task data $D_i$ is collected by specific behavior policies and comparatively limited in coverage, the context encoder is prone to overfitting the behavior policy but fails to generalize to unseen data distributions during evaluation. In the following section, we propose a method to disentangle the effect of behavior policy and base the representation only on the task nature.

\section{Adversarial Data Augmentation}


\begin{figure}
\centering
\includegraphics[width=\linewidth]{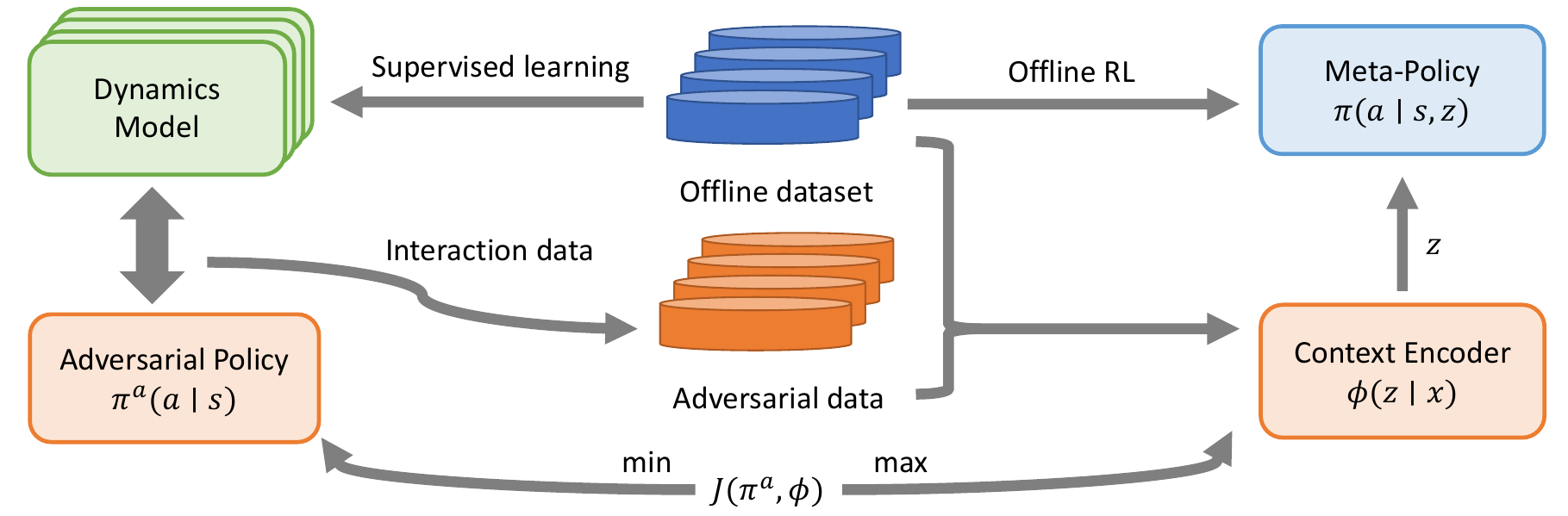}
\caption{The overall process of using adversarial data augmentation for offline meta-RL. }
\label{fig:framework}
\vspace{-2mm}
\end{figure}

In this section, we present a novel OMRL algorithm that can disentangle behavior policies during task representation learning via an effective approach called adversarial data augmentation, as shown in Figure~\ref{fig:framework}. This augmentation process, different from generating in-distribution auxiliary data, aims to provide a more robust data distribution that can minimize the influence of specific behavior policies during training. Therefore, the consequent task representation can break down the spurious relationship between behavior policies and task representations. In Section~\ref{subsec:adv}, we will introduce the adversarial training objective for our data augmentation phase. In Section~\ref{subsec:policy}, we demonstrate how we utilize a model-based RL approach to acquire additional data for task representation learning in the offline setting. In Section~\ref{subsec:framework}, we develop the algorithmic framework to use adversarial data argumentation in OMRL and present its implementation details.

\subsection{Adversarial Training Objective}
\label{subsec:adv}

As mentioned in Section~\ref{subsec:repr-learning}, we can derive a lower bound in the form of InfoNCE ~\citep{infonce} from the mutual information between task representations and the task MDP, which can be described as the following objective: 
\begin{equation}
    \label{eq:original-obj}
    \begin{aligned}
        J(\phi) = \mathbb{E}_{M \sim P(\cdot), z, z^* \sim \phi(\cdot \mid x_M)} \log \frac{\exp (z^\top z^*)}{\Phi(z)}, \\
        \text{where } \Phi(z) = \sum\nolimits_{M' \sim P(\cdot), z' \sim \phi(\cdot \mid x_{M'}) } \exp(z^\top z').
    \end{aligned}
\end{equation}
Here we specify the score function $S(\cdot, \cdot)$ in Equation~\eqref{eq:infonce} to be the inner product, which aligns to prevalent contrastive learning works~\citep{simcse, clip}. Though these works suggest that InfoNCE can be an effective objective to learn powerful representations according to given labels, the objective leads to building a connection between task representations $z$ and task context data $x_M$, rather than the task $M$ itself. Since the context data $x_M$ is sampled from the offline dataset, collected by specific behavior policies, this objective establishes a spurious relationship between behavior policies and learned task representations. In Section~\ref{subsec:toy} we will show an instantiation of this spurious relationship.

To eliminate such a spurious relationship, we envision that the context encoder can derive correct task representations from not only in-distribution data but also context data collected by any policy. To realize such generalization, the context encoder is expected to learn from a wider range of context data, which may help marginalize the effect of behavior policies. However, it is intractable to involve diverse context data collected by disparate policies in offline training. Inspired by previous progress in adversarial training~\citep{gan, gail}, we find that an intuitive approach is to make the context encoder capable of discerning the most indistinguishable samples. Therefore, the context encoder can possess adequate robustness to identify the task no matter what behavior policies the data comes from. 

To generate additional context data for adversarial training, we adopt a model-based RL approach~\citep{mbrl1, mbrl2} to learn environment models and thus provide interaction data in the offline setting. For a particular meta-RL task distribution $P(M)$, previous methods usually consider task variants on environment dynamics $T$ and reward functions $r$. As discussed in the introduction section, we only consider task variations where environment dynamics differ since it is the key factor in the spurious relationship.
Given a multi-task offline dataset $\{D_i\}_{i=1}^N$, we learn several dynamics models via supervised learning with transition tuples sampled from the dataset. These learned dynamics models formulate a new task distribution $\hat{P}(\hat{M})$ that is quite similar to the real task distribution. Denoting $x_{\hat{M}}$ as context data collected from the simulating task $\hat{M}$ with an auxiliary policy $\pi^a$, we summarize our adversarial optimization problem in the max-min form below:
\begin{equation} 
    \label{eq:adv-obj}
    \begin{aligned}
    \max_{\phi}\min_{\pi^a} J(\pi^a, \phi) =  \mathbb{E}_{\hat{M} \sim \hat{P}(\cdot), z, z^* \sim \phi(\cdot \mid x_{\hat{M}}(\pi^a))} \log \frac{\exp(z^\top z^{*})}{\Phi(z)}, \\
    \text{where } \Phi(z) = \sum\nolimits_{\hat{M}' \sim \hat{P}(\cdot), z' \sim \phi(\cdot \mid x_{\hat{M}'}(\pi^a)) } \exp(z^\top z').
    \end{aligned}
\end{equation}
Note that we mark the context data $x_{\hat{M}}(\pi^a)$ for its sources from the simulating MDP $\hat{M}$ and the auxiliary policy $\pi^a$. Here we name the auxiliary policy $\pi^a$ as \textit{adversarial policy} since it is trained against the context encoder's goal. Using the InfoNCE objective $J(\pi^a, \phi)$ as the basic form, the minimization part guarantees that the adversarial policy can learn to search the most indistinguishable data for a specific task. Then the maximization part helps the task identification of the context encoder in the worst-case data. Therefore, the context encoder is expected to recognize tasks well for any data distribution, as it has learned from the weakest data the adversarial policy chooses to generate.

\subsection{Adversarial Data Generation}
\label{subsec:policy}

Starting from Equation~\eqref{eq:adv-obj}, we consider how to generate data from a simulating task distribution. To compensate for the absence of the task MDP in offline settings, we first train several dynamics models for each task data $D_i$ like previous model-based RL approaches~\citep{mbrl1, mbrl2}, following a supervised learning manner: 
\begin{equation}
    \max\nolimits_{\hat{T}_{i}} \mathbb{E}_{(s,a,s')\sim D_i} [\log \hat{T}_{i}(s'|s,a)].
\end{equation}
In addition, the reward function is learned in a similar supervised learning way with all of the task data, as the reward functions are shared across different tasks. 
According to Equation~\eqref{eq:adv-obj}, we require context data from simulating task distribution, which is determined by both the simulating environment model and an adversarial policy. As the environment model should precisely reflect the original task distribution, the functionality of producing adversarial data to confound the context encoder, i.e., minimizing $J(\pi^a, \phi)$, falls to the adversarial policy $\pi^a$. Since context data forms a sequence-style format, we adopt reinforcement learning to optimize the adversarial policy, which helps make the goal of obscuring task identification consistent along the whole trajectory. Specifically, we define the reward as how much ambiguity the task representations are measured by InfoNCE compared to the previous step. Letting $R(z_t) = \mathbb{E}_{z^*} [\log \frac{\exp(z_t^\top z^*)}{\Phi(z_t)}]$ denote the value of $J(\pi^a, \phi)$ at step $t$, the reward for training $\pi^a$ towards the adversarial objective is:
\begin{equation}
    \label{eq:adv-reward}
    r^{\text{adv}}_t = R(z_{t+1}) - R(z_t) = \mathbb{E}_{z^*} \left[\log \frac{\exp(z_{t+1}^\top z^*)}{\Phi(z_{t+1})} - \log \frac{\exp(z_t^\top z^*)}{\Phi(z_t)}\right].
\end{equation}
Intuitively, we reward the adversarial policy for taking actions that will decrease the task representation similarity within the same task. Therefore, the adversarial policy $\pi^a$ is helpful to mine the most indistinguishable data to enhance the robustness of the context encoder $\phi$.
Nevertheless, we notice that adopting the reward function of Equation~\eqref{eq:adv-reward} is not sufficient to train a well-behaved adversarial policy. The learned policy may explore states that are far beyond the environment model's support, which leads to useless data. On the other hand, when only following the adversarial reward, the policy will learn to make decisions that are irrelevant to the task goals. To this point, we highlight two additional terms that should be incorporated with the adversarial reward:

\begin{itemize}
    \item \textit{Uncertainty penalty}. Like previous offline model-based RL works~\citep{morel, mopo}, we quantify the uncertainty $u_{t}(s, a)$ of the current state-action pair by the aleatoric uncertainty of the learned dynamics models on the state transition. Specifically, the aleatoric uncertainty is the maximum standard deviation of the Gaussian distribution of the next states among a group of dynamics models.
    This uncertainty metric serves as a penalty for rollouts in the models to prevent the policy from collecting imprecise data. 
    \item \textit{Task completeness}. We also append the task reward $\hat{r}(s, a)$ provided by the learned reward model to the total reward used for training the adversarial policy. Adopting the task reward should be a natural approach since we focus more on contexts that are related to the task goal rather than other meaningless data. 
\end{itemize}
We also draw theoretical inspirations for the design of these two reward components. We illustrate these reward terms are beneficial to generate useful data in Appendix~\ref{app:theorem}. Combining the two additional reward terms above, we can acquire the total reward to train the adversarial policy: 
\begin{equation}
\label{eq:adv}
    \begin{aligned}
    J(\pi^a) = \mathbb{E}_{\hat{M} \sim \hat{P}(\cdot), s_t \sim \hat{M}, a_t \sim \pi^a(\cdot \mid s_t)} \Big[& \sum_{t=0}^{\infty}\gamma^t \big( r^{\text{adv}}_t - \\
    & \lambda_1 u_t(s,a) + \lambda_2 \hat{r}(s,a) \big) \Big],
    \end{aligned}
\end{equation}
where we get states $s_t$ from the learned environment model and actions $a_t$ from the adversarial policy $\pi^a$ for each task. $\lambda_1$ and $\lambda_2$ are two coefficients to balance the weights of different reward terms.  We adopt the branch rollout approach \citep{mbpo} to mitigate the influence of model error and avoid the requirement of initial state distribution. Instead of using a single dataset, we draw initial states from the joint dataset $D = \bigcup_{i} D_i$ to enhance the coverage of rollout. 

As for the maximization part of Equation~\eqref{eq:adv-obj}, the context encoder $\phi$ can learn from the adversarial data generated by the adversarial policy in the environment model by fixing the adversarial policy. The learning process is through simple contrastive learning with the InfoNCE objective: 
\begin{equation}
    J(\phi) = \mathbb{E}_{\hat{M} \sim \hat{P}(\cdot), z, z^* \sim \phi \left(\cdot \mid x_{\hat{M}}(\pi^a) \right)} \log \frac{\exp(z^\top z^{*})}{\Phi(z)}.
\end{equation}

\begin{algorithm}[t]
\caption{Meta-training with adversarial data augmentation}
\label{alg:reda}
\textbf{Input}: The offline dataset $\{D_i\}_{i=1}^n$ from $n$ tasks, the context encoder $\phi(z| x)$, the meta-policy $\pi(a| s, z)$, and the adversarial policy $\pi^a(a| s)$
\begin{algorithmic}[1]
    \STATE $\triangleright$ \textit{Dynamics model training}
    \FOR{$i = 1, 2, \cdots, n$}
        \STATE Learn $m$ dynamics models with supervised learning on $D_i$ to represent the task dynamics
    \ENDFOR

    \STATE $\triangleright$ \textit{Task representation learning}
    \FOR{$K= 1, \dots, K_c$}
        \STATE $\triangleright$ \textit{Collect adversarial data}
        \FOR{$i = 1, 2, \cdots, n$}
            \STATE Sample adversarial data $\hat{D}_i$ from learned dynamics models $\hat{M}_i$ using adversarial policy $\pi^a$
        \ENDFOR

        \STATE $\triangleright$ \textit{Train the adversarial policy}
        \STATE Compute rewards from sampled adversarial data $\hat{D}_1, \dots, \hat{D}_n$ according to Equation~\eqref{eq:adv}
        \STATE Update adversarial policy $\pi^a(a| s)$ using SAC

        \STATE $\triangleright$ \textit{Train the context encoder}
        \STATE Update context encoder $\phi(z | x)$ with sampled adversarial data $\hat{D}_1, \dots, \hat{D}_n$ 
        
    \ENDFOR
    
    \STATE $\triangleright$ \textit{Meta-policy learning}
    \STATE Update meta-policy $\pi(a| s, z)$ with dataset $\{D_i\}_{i=1}^n$ with fixed context encoder $\phi$

\end{algorithmic}
\end{algorithm}

\subsection{Overall Framework}
\label{subsec:framework}


Though most of our work focuses on the stage of task representation learning, to evaluate its effectiveness, we present the overall OMRL algorithm that meta-trains a policy using adversarial data augmentation. As shown in Algorithm~\ref{alg:reda}, the meta-training process consists of three stages: task dynamics model training, task representation learning, and meta-policy learning. Utilizing provided offline datasets, we first train several dynamics models for each task that are prepared for data generation. During task representation learning, we train the adversarial policy $\pi^a$ and the context encoder $\phi$ alternatively like common generative adversarial learning approaches \citep{gan}. Finally, we fix the context encoder $\phi$ and train the meta-policy with a standard offline RL algorithm. To be more specific, we adopt the classic SAC algorithm~\citep{sac} and add a standard behavior-cloning-style regularization term~\citep{td3bc} for its offline adaptation. The context encoder $\phi(z\mid x)$ has a Transformer-based structure~\citep{attention} for capturing sequential inputs. For details in computing self-attention modules, the query vector comes from the current observation and the last action, and the context data in the form of state-action pairs serves as sources of key and value vectors. For policy optimization, we detach the gradient of output task representation and feed the representation into the meta-policy $\pi(a\mid s, z)$ along with the observed state $s$. The network structure of the meta-policy is a simple multi-layer perceptron (MLP), which is also the same as the adversarial policy $\pi^a (a\mid s)$ despite the absent input of task representation $z$. 

\section{Related Work}

\subsection{Offline Reinforcement Learning}

Offline reinforcement learning proposes to learn policies from a fixed offline dataset and is considered helpful for many realistic domains such as healthcare and robotics~\citep{DBLP:conf/icra/JohanninkBNLKLO19, gottesman2019guidelines} to avoid exhaustive online interactions that is usually impractical for real-world scenarios. The main issue with offline learning is the distribution shift caused by the mismatch between the behavior and exploitation policies~\citep{levine_offlinerl_survey}, inducing a huge challenge for online evaluation. 
When adopting RL algorithms from online RL literature, offline model-free RL methods usually adopt a conservative way to learn policies. For example, they explicitly constrain the policy to be similar to the behavior policy with different regularization manners~\citep{bcq, bear, awac, td3bc, awac, f2c, dae, wpc, bpr, hfgb}, imposing conservative penalty on out-of-distribution actions~\citep{cql, edac, pbrl}, or re-weighting offline samples during policy evaluation and improvement~\citep{iql, iac}. 

Another line of research is offline model-based RL that incorporates dynamics models~\citep{mopo, morel, combo, rambo, maple, mobile, jin2022hybrid} to synthesize data for better generalization. However, the dynamics models are typically trained from offline dataset via supervised learning and still face model errors on unseen state-action pairs. One approach to account for this problem is to modify the learned dynamics~\citep{morel} or add reward penalties to highly uncertain areas~\citep{mopo}, which is quantified by disagreement or maximum standard deviation over an ensemble of dynamics. COMBO~\citep{combo} bypasses uncertainty quantification by leveraging the model-free method for optimization. Recently, RAMBO~\citep{rambo} introduces the idea of robust RL~ and proposes to learn adversarial dynamics models to incorporate conservatism. 

Our paper generally adopts offline model-free RL approaches to train meta-policies by introducing SAC~\citep{sac} with adding a behavior-cloning-style regularization term~\citep{td3bc}. When tackling the auxiliary adversarial policy, we also utilize a model-based RL approach. We train several dynamics models to augment the original dataset and penalize the policy by the model uncertainty, which builds a connection with the model-based offline RL literature~\citep{morel, mopo}. 

\subsection{Offline Meta-Reinforcement Learning}

Meta-reinforcement learning~\cite{metarl-survey} is a popular research area that learns a policy that is capable of adapting to any new task from the task distribution with as little data as possible. When more and more meta-RL approaches are shown to learn a good meta-policy with extremely comprehensive interactions~\citep{maml, mgrl}, offline meta-RL~\cite{omrl_survey} extends the boundary of meta-RL by learning the meta-policy directly from offline data. 

Different from a major research direction on applying model-agnostic meta-learning (MAML) methods~\cite{maml} to offline meta-RL~\citep{macaw}, we here concentrate more on adopting context-based meta-RL approaches~\citep{pearl}. 
As MAML-style approaches often require few-shot updates for task adaptation, context-based meta-RL methods can utilize pre-collected context data from other policies or its bootstrapped interactions, realizing a promising capability of zero-shot generalization. 
When extending context-based meta-RL methods to the offline setting, previous works typically employ contrastive-style learning objectives to learn discriminative representations for each task, such as triplet loss~\citep{mbml}, InfoNCE~\citep{infonce, corro}, and a negative-power variant contrastive loss~\cite{focal}. Similar to our motivation, MBML~\citep{mbml} and CORRO~\citep{corro} also point out the impact of behavior policy on task identification, and try to alleviate this issue via relabeling data with known reward functions or learned generative models like conditional variational autoencoders. Nonetheless, we note that these works do not indeed eliminate the correlation between the behavior policy and the task transition. CSRO~\citep{ibomrl} aims to solve this problem by introducing an information bottleneck between the policy and task features. However, we notice that previous works assume that the behavior policies used to collect the dataset are diversified enough (e.g. CORRO uses each saved checkpoint during the training process to collect data), which alleviates the issue of building the spurious relationship with behavior policies. To the best of our knowledge, we are the first to exhaustively consider the spurious relationship in task representation learning and eliminate the effect of behavior policies.

\begin{figure} 
\centering
\includegraphics[width=1\linewidth]{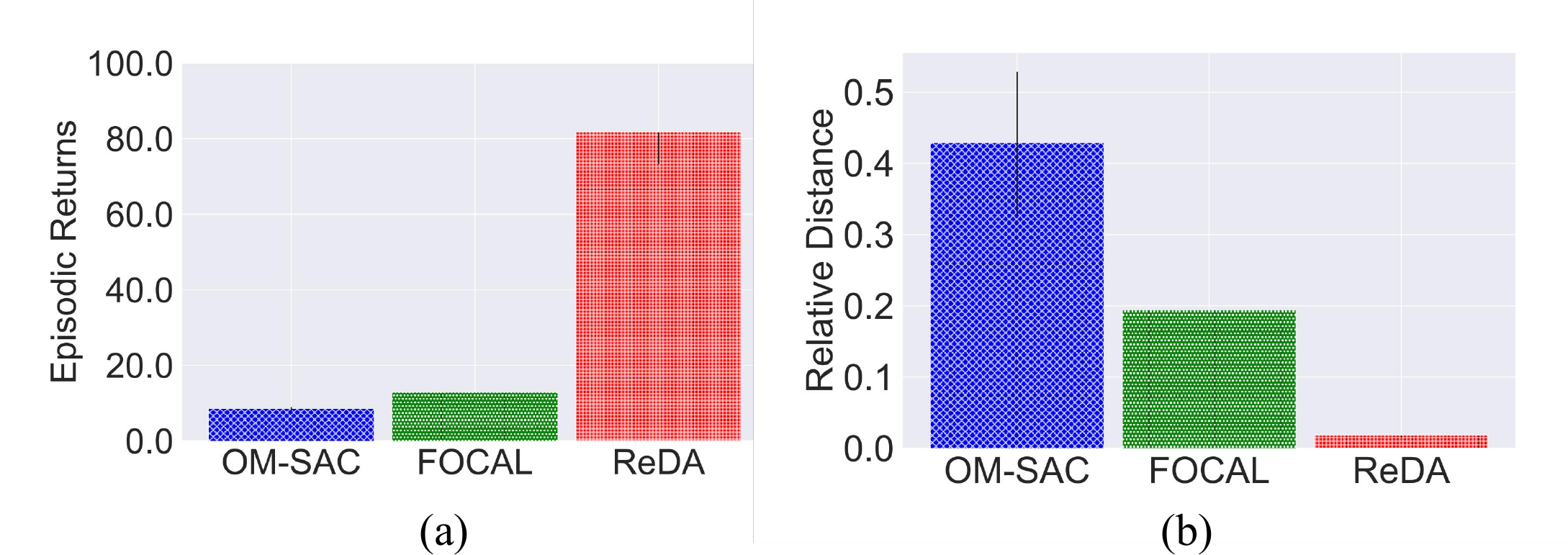}
\vspace{-6mm}
\caption{ (a) Performance on \texttt{InvertedPendulum-v2} with 1.0x gravity coefficient. (b) The relative representation metric (from Equation~\eqref{eq:relative-metric}) of different methods.}
\label{fig:toy}
\vspace{-4mm}
\end{figure}

\begin{table*}[t]
\caption{Average scores of the meta-policy under the on-policy protocol. The performance is averaged over $5$ random seeds. We abbreviate task names for simplicity. For example, \texttt{Walker2d Gravity-1} denotes the \texttt{Walker2d-v2} task with training datasets collected from $1$ checkpoint.}
\vskip 0.05in
\label{table:on}
\centering
\begin{tabular}{l|rrrrrr}
\toprule
Task Type & \multicolumn{1}{c}{OM-BC} & \multicolumn{1}{c}{OM-SAC} & \multicolumn{1}{c}{CORRO} & \multicolumn{1}{c}{FOCAL} & \multicolumn{1}{c}{PromptDT} & \multicolumn{1}{c}{ReDA (ours)}  \\
\midrule

Walker2d Gravity-1 & 835.7 $\pm$ 158.9 & 1227.4 $\pm$ 176.8 & 965.7 $\pm$ 283.5 & 1260.2 $\pm$ 292.1 & 1019.7 $\pm$ 216.4 & \textbf{1720.8 $\pm$ 321.3} \\
Walker2d Gravity-3 & 1110.4 $\pm$ 170.6 & 1180.1 $\pm$ 175.7 & 1269.8 $\pm$ 309.9 & 1584.6 $\pm$ 361.4 & 1125.0 $\pm$ 226.0 & \textbf{1611.6 $\pm$ 288.0} \\
Walker2d Gravity-5 & 657.4 $\pm$ 265.8 & 1216.5 $\pm$ 388.4 & 1412.5 $\pm$ 387.7 & 2181.7 $\pm$ 355.7 & 1082.8 $\pm$ 277.7 & \textbf{2560.7 $\pm$ 362.1} \\
Walker2d Dof-Damping-1 & 2692.3 $\pm$ 488.5 & 2995.4 $\pm$ 287.9 & 2756.5 $\pm$ 379.5 & 3409.2 $\pm$ 343.9 & 2635.0 $\pm$ 298.6 & \textbf{3769.2 $\pm$ 364.2} \\
Walker2d Dof-Damping-3 & 1734.5 $\pm$ 309.0 & 1941.4 $\pm$ 342.5 & 2232.0 $\pm$ 332.4 & 1940.7 $\pm$ 321.9 & 1935.1 $\pm$ 342.3 & \textbf{3536.7 $\pm$ 258.7} \\
Walker2d Dof-Damping-5 & 998.8 $\pm$ 231.3 & 1478.9 $\pm$ 267.6 & 1647.5 $\pm$ 423.3 & 2222.0 $\pm$ 311.0 & 1136.0 $\pm$ 281.2 & \textbf{2497.6 $\pm$ 339.4} \\
Ant Gravity-1 & 3155.4 $\pm$ 347.4 & 2943.5 $\pm$ 533.6 & 3154.4 $\pm$ 698.3 & 3420.9 $\pm$ 536.6 & 3137.1 $\pm$ 437.9 & \textbf{3536.8 $\pm$ 479.8} \\
Ant Gravity-3 & 3673.5 $\pm$ 471.3 & 3623.4 $\pm$ 495.4 & 3845.0 $\pm$ 502.3 & 3763.1 $\pm$ 452.9 & 3665.7 $\pm$ 437.9 & \textbf{4403.4 $\pm$ 464.8} \\
Ant Gravity-5 & 3273.5 $\pm$ 417.4 & 3543.7 $\pm$ 576.0 & 3451.1 $\pm$ 549.6 & 3681.3 $\pm$ 567.2 & 3114.1 $\pm$ 476.4 & \textbf{3735.2 $\pm$ 498.3} \\
Ant Dof-Damping-1 & 1849.5 $\pm$ 378.3 & 3243.6 $\pm$ 395.4 & 3107.5 $\pm$ 299.0 & 2387.7 $\pm$ 380.5 & 2764.7 $\pm$ 374.1 & \textbf{3409.2 $\pm$ 332.1} \\
Ant Dof-Damping-3 & 3119.1 $\pm$ 382.9 & 3385.6 $\pm$ 395.3 & 3289.4 $\pm$ 375.4 & 3160.2 $\pm$ 361.2 & 3378.4 $\pm$ 358.5 & \textbf{3804.2 $\pm$ 412.3} \\
Ant Dof-Damping-5 & 2391.9 $\pm$ 312.3 & 2422.7 $\pm$ 379.4 & 2243.7 $\pm$ 432.3 & 2461.6 $\pm$ 356.9 & 2145.9 $\pm$ 309.2 & \textbf{2547.7 $\pm$ 366.7} \\
Hopper Gravity-1 & 1002.0 $\pm$ 266.6 & 1453.6 $\pm$ 564.4 & 903.4 $\pm$ 134.4 & 1148.6 $\pm$ 344.4 & 1123.9 $\pm$ 333.0 & \textbf{2069.6 $\pm$ 436.5} \\
Hopper Gravity-3 & 1403.0 $\pm$ 405.5 & 1534.7 $\pm$ 398.6 & 700.5 $\pm$ 121.9 & 1562.5 $\pm$ 348.3 & 1264.2 $\pm$ 355.2 & \textbf{1784.0 $\pm$ 415.8} \\
Hopper Gravity-5 & 1086.7 $\pm$ 184.8 & 1495.5 $\pm$ 317.9 & 622.7 $\pm$ 200.8 & 594.3 $\pm$ 443.6 & 1001.6 $\pm$ 232.7 & \textbf{1539.1 $\pm$ 493.1} \\
Hopper Dof-Damping-1 & 1228.3 $\pm$ 233.3 & 1104.6 $\pm$ 258.3 & 787.3 $\pm$ 132.5 & 835.0 $\pm$ 309.5 & 971.9 $\pm$ 265.5 & \textbf{1419.6 $\pm$ 296.7} \\
Hopper Dof-Damping-3 & 1463.4 $\pm$ 390.8 & 1455.6 $\pm$ 388.1 & 768.2 $\pm$ 191.1 & 1445.6 $\pm$ 356.3 & 1223.7 $\pm$ 377.4 & \textbf{1573.0 $\pm$ 400.5} \\
Hopper Dof-Damping-5 & 1047.3 $\pm$ 291.4 & 1461.7 $\pm$ 371.7 & 892.8 $\pm$ 164.3 & 1356.0 $\pm$ 239.2 & 1133.9 $\pm$ 186.5 & \textbf{1552.3 $\pm$ 326.5} \\
HalfCheetah Gravity-1 & 4204.0 $\pm$ 840.3 & 4641.7 $\pm$ 745.6 & 5353.7 $\pm$ 869.6 & 3651.9 $\pm$ 505.9 & 5034.1 $\pm$ 644.3 & \textbf{6772.0 $\pm$ 649.5} \\
HalfCheetah Gravity-3 & 3719.9 $\pm$ 614.6 & 5419.6 $\pm$ 682.0 & 5054.6 $\pm$ 754.7 & 2950.0 $\pm$ 391.7 & 3900.9 $\pm$ 944.0 & \textbf{6130.8 $\pm$ 856.4} \\
HalfCheetah Gravity-5 & 2684.8 $\pm$ 584.5 & 4398.2 $\pm$ 778.3 & 4165.7 $\pm$ 654.7 & 1863.9 $\pm$ 633.4 & 3494.8 $\pm$ 711.6 & \textbf{4549.0 $\pm$ 753.9} \\

\midrule
Average &2063.4&2484.2&2315.4&2232.4&2204.3&\textbf{3072.5}  \\

\bottomrule
\end{tabular}
\end{table*}

\begin{table*}[t]
\caption{Average scores of the meta-policy under the off-policy protocol. The performance is averaged over $5$ random seeds.} 
\vskip 0.05in
\label{table:off}
\centering
\begin{tabular}{l|rrrrrr}
\toprule
Task Type & \multicolumn{1}{c}{OM-BC} & \multicolumn{1}{c}{OM-SAC} & \multicolumn{1}{c}{CORRO} & \multicolumn{1}{c}{FOCAL} & \multicolumn{1}{c}{PromptDT} & \multicolumn{1}{c}{ReDA (ours)}  \\
\midrule

Walker2d Gravity-1 & 102.9 $\pm$ \ \ 16.5 & 773.9 $\pm$ 115.4 & 354.9 $\pm$ 113.3 & 972.4 $\pm$ 314.9 & 516.5 $\pm$ 165.3 & \textbf{1349.8 $\pm$ 374.2} \\
Walker2d Gravity-3 & 887.5 $\pm$ 197.7 & 1096.0 $\pm$ 182.6 & 755.7 $\pm$ 387.5 & 1331.1 $\pm$ 384.5 & 996.6 $\pm$ 204.1 & \textbf{1336.3 $\pm$ 363.3} \\
Walker2d Gravity-5 & 655.4 $\pm$ 253.2 & 983.0 $\pm$ 394.0 & 899.1 $\pm$ 298.5 & 2318.6 $\pm$ 312.3 & 1254.8 $\pm$ 255.0 & \textbf{2421.0 $\pm$ 398.7} \\
Walker2d Dof-Damping-1 & 2566.4 $\pm$ 358.6 & 2567.3 $\pm$ 381.1 & 2138.2 $\pm$ 432.6 & 2141.7 $\pm$ 299.3 & 2594.3 $\pm$ 315.6 & \textbf{3274.5 $\pm$ 429.5} \\
Walker2d Dof-Damping-3 & 1599.2 $\pm$ 340.2 & 1611.1 $\pm$ 415.6 & 1765.8 $\pm$ 498.6 & 1619.0 $\pm$ 362.0 & 1883.9 $\pm$ 331.2 & \textbf{3236.0 $\pm$ 322.7} \\
Walker2d Dof-Damping-5 & 823.6 $\pm$ 340.1 & 1798.6 $\pm$ 345.6 & 1667.2 $\pm$ 438.8 & 1943.4 $\pm$ 353.4 & 1446.5 $\pm$ 326.5 & \textbf{2243.7 $\pm$ 352.2} \\
Ant Gravity-1 & 2512.5 $\pm$ 295.1 & 2673.2 $\pm$ 464.3 & 1876.9 $\pm$ 527.6 & 3240.3 $\pm$ 346.8 & 2895.3 $\pm$ 349.5 & \textbf{3392.5 $\pm$ 343.9} \\
Ant Gravity-3 & 2910.7 $\pm$ 283.4 & 2982.0 $\pm$ 422.7 & 1575.8 $\pm$ 675.5 & 3442.5 $\pm$ 337.2 & 3243.7 $\pm$ 322.4 & \textbf{4128.6 $\pm$ 348.3} \\
Ant Gravity-5 & 3064.7 $\pm$ 385.9 & 2764.7 $\pm$ 445.5 & 1647.8 $\pm$ 624.2 & 3437.4 $\pm$ 357.1 & 3049.1 $\pm$ 364.1 & \textbf{3529.1 $\pm$ 337.4} \\
Ant Dof-Damping-1 & 2319.5 $\pm$ 373.3 & 3125.4 $\pm$ 362.8 & 2469.4 $\pm$ 446.6 & 2171.1 $\pm$ 363.3 & 2638.5 $\pm$ 332.6 & \textbf{3367.4 $\pm$ 302.7} \\
Ant Dof-Damping-3 & 2580.4 $\pm$ 291.1 & 2848.9 $\pm$ 399.0 & 2507.5 $\pm$ 355.7 & 3022.2 $\pm$ 318.5 & 2723.8 $\pm$ 317.7 & \textbf{3727.2 $\pm$ 356.4} \\
Ant Dof-Damping-5 & 2380.7 $\pm$ 228.1 & 2218.4 $\pm$ 360.1 & 2174.6 $\pm$ 289.0 & 2591.0 $\pm$ 338.7 & 2441.0 $\pm$ 206.5 & \textbf{2611.3 $\pm$ 382.5} \\
Hopper Gravity-1 & 1394.2 $\pm$ 330.2 & 1322.9 $\pm$ 455.2 & 925.6 $\pm$ 107.1 & 1328.8 $\pm$ 306.7 & 1305.5 $\pm$ 346.4 & \textbf{1674.5 $\pm$ 512.6} \\
Hopper Gravity-3 & 1497.8 $\pm$ 406.2 & 1429.3 $\pm$ 348.3 & 671.3 $\pm$ 223.9 & \textbf{1844.8  $\pm$ 741.2} & 1574.3 $\pm$ 366.7 & 1521.2 $\pm$ \ 376.0 \\
Hopper Gravity-5 & 1262.6 $\pm$ 412.6 & 1318.6 $\pm$ 331.5 & 754.8 $\pm$ 208.7 & 750.1 $\pm$ 299.9 & 1396.6 $\pm$ 276.3 & \textbf{1489.6 $\pm$ 319.9} \\
Hopper Dof-Damping-1 & 1241.7 $\pm$ 144.3 & 991.5 $\pm$ 189.2 & 804.3 $\pm$ 122.8 & 905.2 $\pm$ 345.9 & 1112.9 $\pm$ 164.6 & \textbf{1255.7 $\pm$ 306.4} \\
Hopper Dof-Damping-3 & 1161.4 $\pm$ 279.9 & 1136.9 $\pm$ 372.9 & 863.4 $\pm$ 212.7 & 672.8 $\pm$ 301.7 & 1284.7 $\pm$ 244.3 & \textbf{1440.0 $\pm$ 369.6} \\
Hopper Dof-Damping-5 & 715.8 $\pm$ 220.2 & 1387.3 $\pm$ 416.6 & 637.6 $\pm$ 275.7 & 722.9 $\pm$ 295.3 & 910.4 $\pm$ 264.2 & \textbf{1492.2 $\pm$ 334.3} \\
HalfCheetah Gravity-1 & 6024.0 $\pm$ 768.2 & 3985.1 $\pm$ 676.9 & 5244.7 $\pm$ 766.6 & 3960.4 $\pm$ 517.0 & 5755.3 $\pm$ 582.9 & \textbf{6243.8 $\pm$ 604.4} \\
HalfCheetah Gravity-3 & 2112.9 $\pm$ 601.1 & 4792.5 $\pm$ 596.7 & 4865.2 $\pm$ 710.8 & 2293.1 $\pm$ 532.1 & 3211.5 $\pm$ 660.6 & \textbf{5749.2 $\pm$ 599.0} \\
HalfCheetah Gravity-5 & 3006.2 $\pm$ 684.3 & 4164.7 $\pm$ 602.3 & 3654.6 $\pm$ 677.0 & 2181.2 $\pm$ 414.6 & 3243.8 $\pm$ 601.1 & \textbf{4268.1 $\pm$ 562.2} \\

\midrule

Average &1943.8&2189.1&1821.6&2042.4&2165.7&\textbf{2845.3} \\

\bottomrule
\end{tabular}
\end{table*}

\section{Experiments}

In this section, we conduct experiments to evaluate the proposed adversarial data augmentation\footnote{Code is available at \url{https://github.com/LAMDA-RL/ReDA}}. We investigate whether the task \textbf{Re}presentation learning via adversarial \textbf{D}ata \textbf{A}ugmentation (ReDA) can disentangle the effect of behavior policies and thus helps train generalizable meta-policies.
In Section~\ref{subsec:toy}, we use a didactic example to show the issue of the spurious relationship and how ReDA can help address it via a classic control task. In Section~\ref{subsec:pb}, we establish tasks and datasets to benchmark ReDA and other OMRL baselines. We particularly design two evaluation manners called on-policy and off-policy protocols to verify the effectiveness of learned task representation under different circumstances. Finally in Section~\ref{subsec:qtf}, we ablate multiple design choices in the adversarial data augmentation process, for example, the use of models and different reward terms on training adversarial policy. We also provide illustrative explanations on task representations in out-of-distribution data.

\begin{figure*} 
\centering
\includegraphics[width=0.9\textwidth]{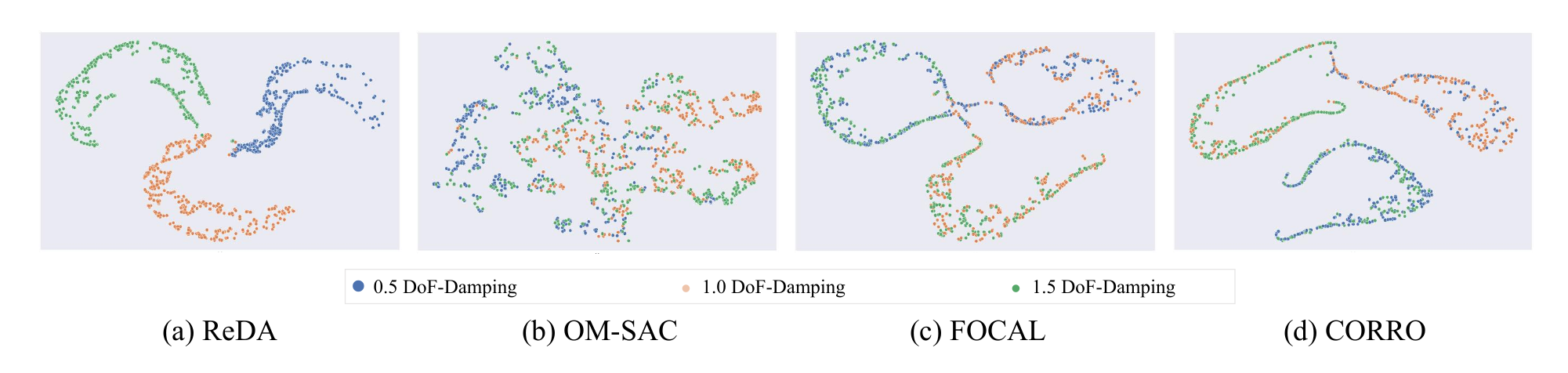}
\vspace{-2mm}
\caption{Visualization on task representations with t-SNE dimensionality reduction for (a) ReDA, (b) OM-SAC, (c) FOCAL, and (d) CORRO on the task set Walker2d Dof-Damping-1. Task representations from different tasks are shown in distinct colors.}
\label{fig:visual}
\vspace{-3mm}

\end{figure*}

\subsection{A Didactic Example}
\label{subsec:toy}

As a straightforward example to show the spurious relationship between behavior policies and task representations, we conduct experiments on the \texttt{InvertedPendulum-v2} environment from Gym's classic control benchmarks. To make task variants, we modify the gravity of the environment by multiplying it by $1.0$ and $2.0$, creating two distinct tasks.
For offline dataset collection, we independently train SAC policies on each task. We select the checkpoint that achieves an expected episodic return of $100$ for gravity $1.0$ (task 1) and that of $50$ of gravity $2.0$ (task 2) to collect training data. We denote these two datasets as $D_1^\text{train}$ and $D_2^\text{train}$. After training, we choose the test dataset from task 1 with returns of $50$, which is denoted by $D_1^\text{test}$. To highlight the spurious relationship issue, we evaluate the meta-policy on task 1 with context data from $D_1^\text{test}$. We speculate that the existence of the spurious relationship may encourage the context encoder to predict task representations more similar to task 2. We test the performance with several baselines. FOCAL is an OMRL baseline~\citep{focal} that designs an ad-hoc contrastive learning objective for task representation learning. 
OM-SAC is a baseline that adopts the same context encoder structure as ReDA but only adopts the same offline RL manner as ReDA without data augmentation and context encoder learning.

We depict the performance in task 1 when using context data from $D_1^\text{test}$ in Figure \ref{fig:toy}(a). The majority of baselines exhibit poor performance, indicating a failure in task identification and generalization. In contrast, our method ReDA can better capture task features and thus performs exceptionally well when facing out-of-distribution contexts.
To get deeper insights on how well the task representations are, we define a relative representation metric $d(\phi)$ to delineate whether the task representation is close to task 1 given its out-of-distribution context data, as follows: 
\begin{equation}
    \label{eq:relative-metric}
    d(\phi) = \frac{|D_2^\text{train}|\sum_{z_1 \sim \phi(\cdot | x_1), x_1 \sim D_1^\text{train}} \sum_{z \sim \phi(\cdot | x), x \sim  D_1^\text{test}} \|z_1 - z\|^2}{|D_1^\text{train}|\sum_{z_2 \sim \phi(\cdot | x_2), x_2 \sim D_2^\text{train}} \sum_{z \sim \phi(\cdot | x), x \sim D_1^\text{test}} \|z_2 - z\|^2}.
\end{equation}
We acquire this metric by feeding context data from each training dataset and computing the mean square errors with task representations from test data. It is obvious that when $d(\phi)$ is close to $0$, the context encoder correctly recognizes most context data as task 1. A large value of $d(\phi)$ means the context encoder cannot perform task identification well. As illustrated in Figure \ref{fig:toy}(b), we find that only the context encoder of ReDA reaches a $d(\phi)$ near $0$, indicating its effectiveness. In contrast, OM-SAC and FOCAL both have a larger $d(\phi)$ compared to ReDA, while the particular contrastive objective of FOCAL makes it slightly better than OM-SAC. We include a detailed description of this example in Appendix~\ref{app:details}.


\subsection{Performance on MuJoCo Benchmarks}
\label{subsec:pb}

Following the experiment setups in previous meta-RL works~\cite {corro, focal, luo}, we construct multiple tasks by varying certain hyper-parameters of the simulator in MuJoCo locomotion tasks~\citep{mujoco}. Specifically, we choose a set of Gym's MuJoCo environments, including \texttt{HalfCheetah-v2}, \texttt{Hopper-v2}, \texttt{Walker2d-v2}, and \texttt{Ant-v2}. For each environment, we perturb the environment parameters, namely \texttt{gravity} and \texttt{dof-damping}, of the original simulator to create a wide range of tasks. During offline training, the meta-policy is trained when the environment parameters are set to $0.5$, $1.0$, and $1.5$ times of their original values for each environment. We also choose $0.8$ and $1.2$ times of the parameters as unseen tasks to evaluate the generalization ability. 
For each task, we train behavior policies of multiple qualities via SAC~\citep{sac}. Similar to that in CORRO~\citep{corro}, we save the checkpoint during the whole learning process but only select $1$, $3$, and $5$ checkpoints from all to collect training data and use the rest checkpoints to collect test data. Our data collection is distinguished from CORRO, which utilizes all checkpoints to collect training data. We name the task type by the number of checkpoints used for training. For example, \texttt{Walker2d Gravity-1} means that the offline dataset of Walker2d is collected by $1$ checkpoint in 0.5x, 1.0x, and 1.5x gravity. Besides the above baselines, we also include OM-BC, which trains a meta-policy with behavior cloning, and two additional OMRL baselines CORRO~\cite{corro} and PromptDT~\cite{pdt}. To better show the generalization ability of different OMRL algorithms, we apply two evaluation protocols to shape different context data: \textit{on-policy protocol} and \textit{off-policy protocol}. 

\textbf{On-policy protocol. }In the on-policy protocol, the context data comes from interactions the meta-policy encounters when deployed to the task. Compared to the behavior policies that collect the offline data, the meta-policy is often an improved one, and thus there still exists a mismatch between training-time and test-time distribution. We report the performance in Table~\ref{table:on}. The results demonstrate that ReDA significantly outperforms other baseline methods when using bootstrapped context data, indicating the superior ability of task recognition and generalization. Previous works like FOCAL and CORRO exhibit similar performance to OM-SAC, indicating that their task representation learning approaches cannot work for this extreme setting of poor data coverage.

\textbf{Off-policy protocol. }To further investigate the generalization ability of the context encoders, we introduce the off-policy protocol where we replace the context with interactions collected by unseen checkpoints. This evaluation type typically raises a higher requirement for the generalization capability of the context encoder since it should output correct task representations collected from arbitrary policies. As shown in Table \ref{table:off}, we notably find that almost all methods have performance drops in the off-policy protocol, while our algorithm still achieves significant performance advantages compared to other baselines.

\begin{table}[t]
	\caption{Ablations on different variants of ReDA. The variant w/o model has no data augmentation and the variant w/o adv. augments data with rollouts from random policies. \texttt{DofDamp} is short for \texttt{Dof-Damping}.}
	\label{tab:abld}
	\begin{tabular}{lrrr}\toprule
		Task Type & w/o model & w/o adv. & ReDA \\ \midrule
        Walker2d Gravity-1 & 834.0 & 1197.5 & \textbf{1349.8 $\pm$ 321.3} \\
		Walker2d Gravity-3 & 1135.5 & 1268.9 & \textbf{1336.3 $\pm$ 363.3} \\
        Walker2d Gravity-5 & 1348.6 & 1220.2 & \textbf{2421.0 $\pm$ 398.7} \\
		Walker2d DofDamp-1 & 2606.5 & 3186.2 & \textbf{3274.5 $\pm$ 429.5} \\
		Walker2d DofDamp-3 & 1773.3 & 2845.8 & \textbf{3236.0 $\pm$ 322.7} \\
        Walker2d DofDamp-5 & 1641.7 & 1566.0 & \textbf{2243.7 $\pm$ 352.2} \\
  
  \bottomrule
	\end{tabular}
\end{table}

\begin{table}[t]
    \caption{Ablations on adversarial policy learning with different reward terms. The variant w/o UP learns the policy is ReDA with $\lambda_1=0$. The variant w/o TC is ReDA with $\lambda_2=0$. \texttt{DofDamp} is short for \texttt{Dof-Damping}.}
	\label{tab:ablr}
	\begin{tabular}{lrrr}\toprule
		Task Type & w/o UP & w/o TC & ReDA \\ \midrule
        Walker2d Gravity-1 & 1185.4 & 1281.3 & \textbf{1349.8 $\pm$ 321.3} \\
		Walker2d Gravity-3 & 1249.0 & 1292.3 & \textbf{1336.3 $\pm$ 363.3} \\
        Walker2d Gravity-5 & 1572.1 & 2212.5 & \textbf{2421.0 $\pm$ 398.7} \\
		Walker2d DofDamp-1 & 2893.2 & 3002.4 & \textbf{3274.5 $\pm$ 429.5} \\
		Walker2d DofDamp-3 & 2985.2 & 3145.8 & \textbf{3236.0 $\pm$ 322.7} \\
        Walker2d DofDamp-5 & 1787.8 & 2209.2 & \textbf{2243.7 $\pm$ 352.2} \\
  
  \bottomrule
	\end{tabular}
\end{table}

\subsection{Ablation Studies}
\label{subsec:qtf}
We design a few ablation studies to investigate why ReDA works in the experiments above. To compare how well the method identifies different tasks, we visualize the task representations trained in the \texttt{Walker2d Dof-Damping-1} setting in Figure~\ref{fig:visual}. The task representations are obtained by feeding the context encoders with interaction data from tasks including different gravity coefficients. We find that ReDA can generate generally distinct task representations for different tasks even though some data patterns are never encountered during training, indicating that the proposed adversarial data augmentation helps improve out-of-distribution generalization. When we remove the adversarial data augmentation of ReDA and keep other procedures the same (shown in the results of OM-SAC), the consequent task representations become indistinguishable, as well as those from FOCAL and CORRO. 
To ablate each component in our adversarial data augmentation process, we also design two variants of ReDA and compare their performance on \texttt{Walker2d Gravity} and \texttt{Walker2d Dof-Damping} dataset. One variant is to implement ReDA but does not introduce pre-trained dynamics models for data augmentation (w/o model). The other variant is to utilize the model for data augmentation but with a random policy rather than an additional adversarial policy (w/o adv.). As shown in Table~\ref{tab:abld}, the results on the Walker2d task sets demonstrate that both two variants degrade severely compared to ReDA, which verifies the effectiveness of the proposed adversarial data augmentation. Besides, The performance of the ReDA variant without data augmentation is not as high as the ReDA variant using dynamics models, indicating that introducing additional context data from pre-trained dynamics models is helpful for generalization of task representations. 
Then we turn to ablate the design of reward for learning the adversarial policy. As we adopt two additional reward terms in our methodology, we propose two variants of ReDA. One variant (w/o UP) is ReDA without uncertainty penalty ($\lambda_1 = 0$) and the other (w/o TC) is ReDA without task completeness reward ($\lambda_2 = 0$). The results in Table~\ref{tab:ablr} show that our reward design could promote task representation generalization while these two variants show less promising performance. The uncertainty penalty, which is adopted from offline model-based literature~\citep{mopo}, is shown to be more critical to policy performance. We also provide more results of task representation visualization on mentioned ablation variants in Appendix~\ref{app:details}, which shows that ReDA outputs more distinct representations than those variants.

\section{Conclusion}
\label{sec:conclu}

In this paper, we shift our attention to OMRL in a low-data regime, where the offline dataset is limited in terms of coverage by collective policy. We underscore the criticality of disentangling behavior policies from task representation learning and offer a practical solution named adversarial data augmentation. We train an adversarial policy to prevent the context encoder from identifying the task and augment the offline dataset. The data is collected in learned dynamics models, which is irrelevant to the original behavior policies and thus can assist in learning robust task representations. Experimental results reveal that when the dataset is encompassed by limited behavior policies, previous OMRL methods encounter difficulties with task identification. Nevertheless, task representation learning with adversarial data augmentation resolves this problem, effectively generalizing to out-of-distribution context data at test time. Our aspiration for this research is to stimulate further exploration into the impact of behavior policies and the standardization of OMRL benchmarks. One identified limitation of our work is the rudimentary designation of the context encoder, with us merely inheriting a simple Transformer architecture, tasked with extracting and aggregating context information. Due to its departure from this study's primary focus, we reserve this issue for future work.



\begin{acks}
This work is supported by National Science Foundation of China (61921006) and Tencent AI Lab (RBFR2023011). We thank the anonymous reviewers for their support and helpful discussions. 
\end{acks}


\bibliographystyle{ACM-Reference-Format} 
\balance
\bibliography{sample}


\onecolumn
\appendix

\section{Theoretical Explanation}
\label{app:theorem}
In this section, we analyze the quality of data provided by learned models and adversarial policy. Since the learned models may be inaccurate, we investigate the trustworthiness of the data. For the group of ensemble MDP models as the surrogate dynamics $\hat{M}_i=(\hat{T}_i, \hat{r})$ for each original task, we assume the model error is bounded by $\epsilon_m$: $\max_{i} \max_{s,a} [D_{\rm TV}(T_{i}(\cdot|s,a), \hat{T}_{i}(\cdot|s,a))] \leq \epsilon_m$, where $T_i$ is the ground truth transition of $i$-th task. We use the maximum $TV$ divergence of transitions as the distance between two models. Then we define the learning process by a definition of mapping, which we called $\epsilon$-mapping:

\begin{definition}{($\epsilon$-mapping).} \label{def:map}
$f_{\epsilon}: \mathcal{M} \mapsto \mathcal{M}$ is an $\epsilon$-mapping, if $\forall M \in \mathcal{M}$ and $\hat{M} = f_{\epsilon}(M) \in \mathcal{M}$, where $T$ and $\hat{T}$ are their transitions correspondingly, $D_{\rm TV}(T(\cdot|s,a), \hat{T}(\cdot|s,a)) \leq \epsilon$ holds.
\end{definition}

This definition implies that if the model learning process could guarantee the model error less than $\epsilon_m$ and the learned model could still be a valid task to guarantee $f_{\epsilon}(M) \in \mathcal{M}$, which is the support of the whole task distribution, then the learning process is an $\epsilon_m$-mapping $f_{\epsilon_m}$. We will use $f$ as a substitute for $f_{\epsilon_m}$ in the following of our paper. 

The learned dynamics models construct a surrogate task distribution $\hat{P}(M)$. Correspondingly we define the probability space of the surrogate task distribution: $(\mathcal{M}, 2^{\mathcal{M}}, \hat{P})$, where $2^{\mathcal{M}}$ is a power set of $\mathcal{M}$ to serve as the event field, and we consider that the surrogate distribution is mapped from the raw probability space $(\mathcal{M}, 2^{\mathcal{M}}, P)$ based on a mapping $f$, which satisfies $\hat{P}({\mathcal{M}_{s}}) = P(f^{-1}(\mathcal{M}_{s})) $, $\forall \mathcal{M}_{s} \subset \mathcal{M}$. 
To measure the distance between distribution $P$ and its surrogate $\hat{P}$, we make an $L$-Lipschitz assumption for the probability measure $P$:

\begin{assumption}{(Task Distribution Smoothness).} 
\label{assump:taskds}
Given the probability space $(\mathcal{M}, 2^\mathcal{M}, P)$ and any subset $\mathcal{M}_{s}$ of $\mathcal{M}$, the probability measure $P$ satisfies that
$$
|P(\mathcal{M}_s) - P(f(\mathcal{M}_{s}))| \leq L \epsilon.
$$
\end{assumption}

The assumption reveals that if two tasks are similar in the task distribution, their probabilities are close as well, especially in continuous control tasks where the hyper-parameters of environments are often continuous. Under the condition of smoothness of the task distribution, we obtain the following Theorem:

\begin{theorem}{(Task Distribution Shift).} \label{theorem:taskg}
Given the probability space of meta-task distribution $(\mathcal{M}, 2^{\mathcal{M}}, P)$, for each model $M$ sampled according to $P$, we learned surrogate model $\hat{M}$ and let $\hat{P}$ be the task distribution associated with the real model $\hat{M}$, the TV divergence between $P$ and $\hat{P}$ can be bounded by:
$$
D_{\rm TV}(P(M), \hat{P}(M)) \leq L\epsilon_m.
$$

\end{theorem}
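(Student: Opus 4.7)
The plan is to unpack the definition of total variation distance, use the push-forward relation $\hat P(\mathcal{M}_s) = P(f^{-1}(\mathcal{M}_s))$ that ties $\hat P$ to $P$ via the learning map $f = f_{\epsilon_m}$, and then invoke Assumption~\ref{assump:taskds} essentially verbatim after a change of the dummy event. First I would recall that
\[
D_{\rm TV}(P, \hat P) \;=\; \sup_{\mathcal{M}_s \in 2^{\mathcal{M}}} \bigl| P(\mathcal{M}_s) - \hat P(\mathcal{M}_s) \bigr|
\]
(or its half-version, whichever convention is used earlier in the paper; the bound is insensitive to this). Substituting the push-forward identity reduces the problem to controlling $\bigl|P(\mathcal{M}_s) - P(f^{-1}(\mathcal{M}_s))\bigr|$ uniformly in $\mathcal{M}_s$.

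Next I would apply Assumption~\ref{assump:taskds} with the event $f^{-1}(\mathcal{M}_s)$ in place of the generic $\mathcal{M}_s$. This yields
\[
\bigl| P\bigl(f^{-1}(\mathcal{M}_s)\bigr) - P\bigl(f(f^{-1}(\mathcal{M}_s))\bigr) \bigr| \;\le\; L\epsilon_m .
\]
The remaining task is to argue $f(f^{-1}(\mathcal{M}_s)) = \mathcal{M}_s$ for every measurable $\mathcal{M}_s$, which is automatic whenever $f$ is surjective onto $\mathcal{M}$; by Definition~\ref{def:map} the learned-model map sends $\mathcal{M}$ into $\mathcal{M}$, and it is natural here to read the setup as saying that $\mathcal{M}$ is precisely the support, so this identity is available. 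Combining the two displays and taking the supremum over $\mathcal{M}_s$ yields the claimed $L\epsilon_m$ bound.

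\paragraph{Main obstacle.}
The only non-routine point is the set-theoretic identity $f(f^{-1}(\mathcal{M}_s)) = \mathcal{M}_s$: in general one only has $\subseteq$, with equality exactly when $\mathcal{M}_s$ lies in the image of $f$. I expect this to be the step requiring the most care, and I would handle it by either (i) invoking surjectivity of $f$ on $\mathcal{M}$ implicit in Definition~\ref{def:map}, or (ii) restricting the supremum in the TV definition to events inside $f(\mathcal{M})$ (events disjoint from the image contribute zero to $\hat P$ and at most $L\epsilon_m$ to $P$ by a direct second application of Assumption~\ref{assump:taskds} with $\mathcal{M}_s \cap f(\mathcal{M})^c$). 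Either route closes the gap and the theorem follows with no hidden constants beyond $L\epsilon_m$.
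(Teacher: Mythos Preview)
Your proposal is correct and follows essentially the same route as the paper. The paper's proof is a single line,
\[
D_{\rm TV}(P, \hat P)=\sup_{\mathcal{M}_s\subset\mathcal{M}}\bigl[P(\mathcal{M}_s)-P(f(\mathcal{M}_s))\bigr]\le L\epsilon_m,
\]
which is exactly your argument after the change of dummy event; the paper simply writes the push-forward substitution and the re-indexing as an identity without isolating the $f(f^{-1}(\mathcal{M}_s))=\mathcal{M}_s$ step that you flag. In other words, the surjectivity concern you raise is implicitly used (and glossed over) in the paper as well, so your more careful handling is a strict refinement rather than a different method.
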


\begin{proof}

\begin{equation*}
    \begin{split}
        D_{\rm TV}(P(M), \hat{P}(M)) = \sup_{\mathcal{M}_s \subset \mathcal{M}} [P(\mathcal{M}_s) - P(f(\mathcal{M}_s))] \leq L\epsilon_m.
    \end{split}
\end{equation*}

\end{proof}

This theorem reveals the potential model bias when learning task distribution can be well reduced. The core idea of controlling model bias is that we should guarantee an $\epsilon$-mapping. To satisfy this, the model error of the collected data should be low so that the learned model could still be within the support of the whole task distribution. To guarantee the data rollout with low model errors, we thus consider the uncertainty penalty. To guarantee the constructed learned model is a valid surrogate task, we perform task-relevant data searching, which suits the design of task reward.

\section{Details of Experiments}
\label{app:details}

\subsection{MuJoCo Tasks}


To evaluate our methods, we have selected several locomotion environments from the MuJoCo tasks, including \texttt{Walker2d-v2}, \texttt{Hopper-v2}, \texttt{Ant-v2}, and \texttt{HalfCheetah-v2}. Here is a brief introduction to these four tasks:
 
\textbf{Hopper-v2}. The Hopper environment in MuJoCo simulates a 2D one-legged robot, resembling a grasshopper or a pogo stick. The objective of the task is to control the robot's actions in order to make it hop and maintain balance while moving forward. The agent needs to learn how to apply the correct forces to the leg joints to generate hopping motions and effectively navigate the environment. The goal is to develop control policies that enable the robot to achieve stable and efficient hopping movements.

\textbf{Walker2d-v2}. The Walker2d environment in MuJoCo represents a 2D bipedal robot with two legs. The goal of the task is to control the robot's movements and enable it to walk while maintaining stability. The agent must learn to generate coordinated leg movements and adjust joint angles to achieve smooth and balanced walking gaits. The objective is to develop control policies that allow the robot to navigate the environment with stable and efficient walking motions.

\textbf{HalfCheetah-v2}. The HalfCheetah environment in MuJoCo simulates a 2D model of a cheetah-like quadruped robot. The objective of the task is to control the robot's actions and enable it to achieve fast and efficient running. The agent needs to learn how to generate coordinated leg movements and apply appropriate forces to the leg joints in order to maximize speed and maintain stability during locomotion. The goal is to develop control policies that allow the robot to exhibit agile and stable running behaviors.

\textbf{Ant-v2}. The Ant environment simulates a four-legged ant-like robot in a 3D environment. The objective of the task is to control the movements of the ant and navigate it through the environment. The agent needs to learn to coordinate the leg movements and apply appropriate forces to the leg joints to achieve efficient locomotion. The ant robot must adapt to different terrains, avoid obstacles, and maintain balance while moving toward a specified target or goal. The agent's goal is to learn effective control policies that enable the ant to navigate and accomplish tasks in the environment.

Each environment is controlled by various physical parameters such as \texttt{gravity}, \texttt{dof-damping}, and others. We specifically choose parameters that can significantly impact the transitions. For instance, as depicted in Figure~\ref{fig:meta}, when considering the same action sequence and initial state, the task with $dof$-damping set to $1.0$ results in a stable walking trajectory, while the task with $0.5$ causes a forward fall, and one with $1.5$ leads to a backward fall. These distinct outcomes imply that accurate task identification is significant to OMRL. We mainly create task variants based on different parameters on \texttt{gravity} and \texttt{dof-damping}. However, we found that \texttt{dof-damping} has little influence on the \texttt{HalfCheetah-v2} environment. Therefore, we only consider variants of \texttt{gravity} for \texttt{HalfCheetah-v2} tasks.

\subsection{Task Set Construction}

\begin{figure} 
\centering
\includegraphics[width=0.8\linewidth]{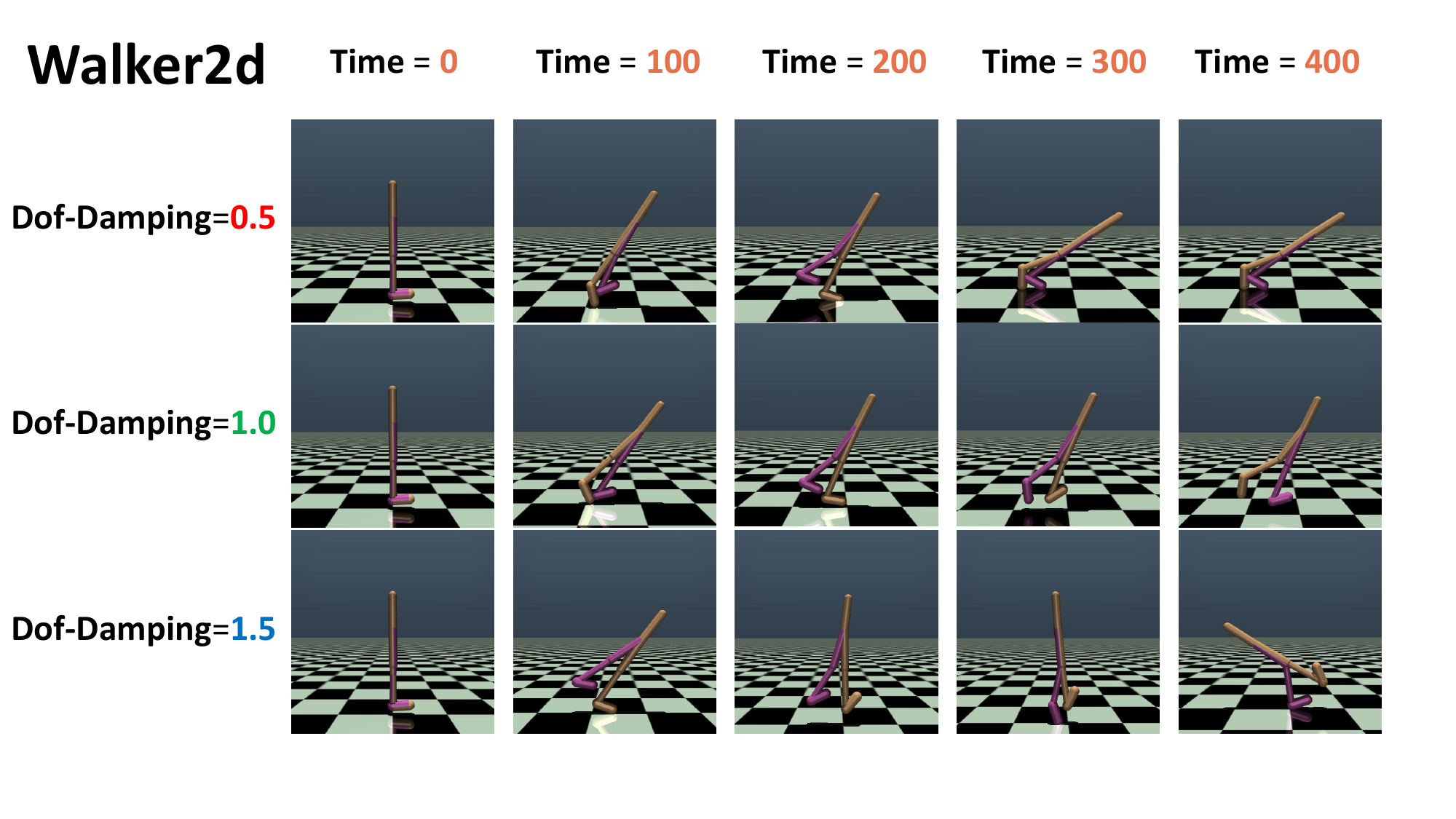}
\vspace{-3mm}
\caption{The visual trajectories of Walker2d-v2 with distinct hyper-parameters of \texttt{dof-damping}, including $0.5$, $1.0$, and $1.5$. Each trajectory is fed with the same initial state and following action sequence.}
\label{fig:meta}
\vspace{-5mm}
\end{figure}

To collect the offline dataset, we utilized saved checkpoints of the policy during the learning process of the SAC algorithm. Specifically, we trained SAC for $1$ million time steps and saved checkpoints at every interval of $20,000$ time steps, resulting in a total number of $50$ checkpoints.
From these $50$ checkpoints, we selected a subset as our training data. For tasks with only $1$ checkpoint, we chose the $25$-th checkpoint as the training data.
For tasks with $3$ checkpoints, we selected the $5$-th, $25$-th, and $45$-th checkpoints as our training data.
In the case of tasks with $5$ checkpoints, we selected the $5$-th, $10$-th, $25$-th, $40$-th, and $45$-th checkpoints as our training data. For all cases, the remained checkpoints were used for the test dataset.

To ensure a fixed data size for different task sets, we maintained a total data size of $1$ million transitions across all tasks. For instance, for the task set having $5$ checkpoints, each checkpoint contributes $200,000$ transitions to the dataset. This approach allowed us to maintain a universal dataset size, regardless of the number of checkpoints used.

\subsection{Details of Didactic Examples}
\begin{figure} 
\centering
\includegraphics[width=0.9\linewidth]{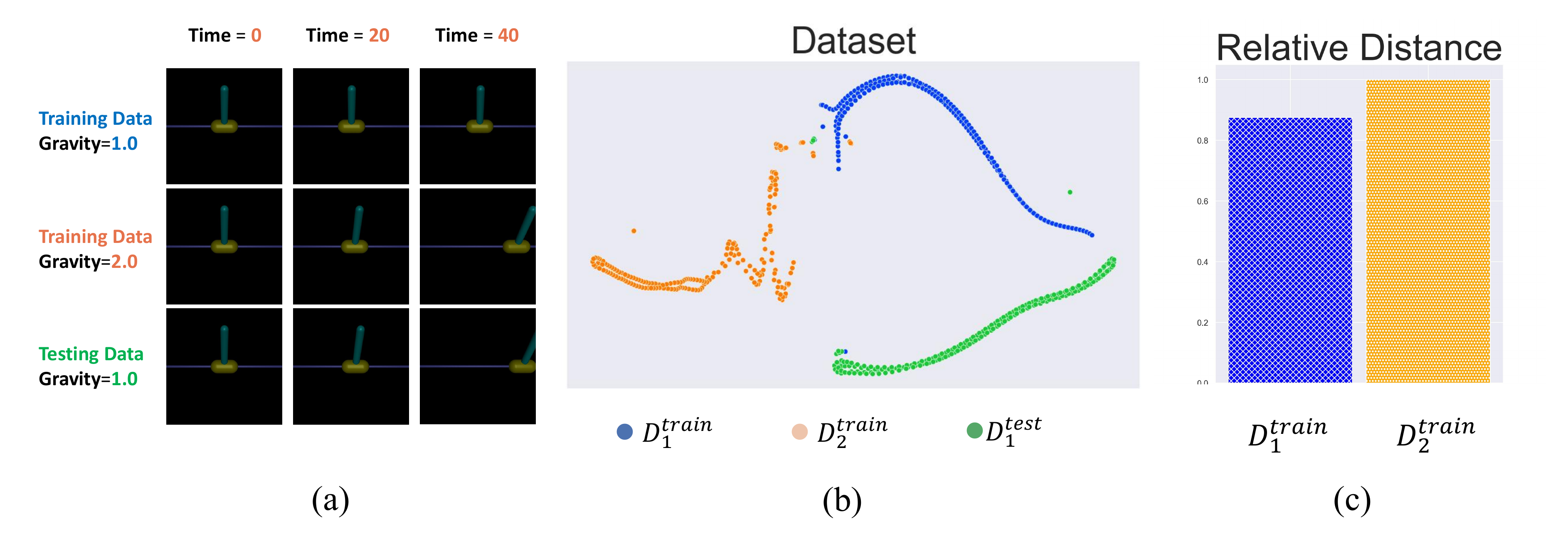}
\vspace{-3mm}
\caption{Visualization on the training and test datasets in \texttt{InvertedPendulum-v2} from our didactic example. (a) The rendered images from the environment in different datasets. (b) The t-SNE plot of state-action pairs in different datasets. (c) The relative representation metrics (from Equation\eqref{eq:relative-metric}) of two training datasets compared to the test dataset.}
\label{fig:toy_rela}
\vspace{-5mm}
\end{figure}

We present additional information about the data in our didactic example from Figure~\ref{fig:toy_rela}. In Figure~\ref{fig:toy_rela} (a), we observe that the training dataset $D_{2}^{\text{train}}$ and the test dataset $D_{1}^{\text{test}}$ have similar trajectories because of similar policy performance, though they are collected from two distinct tasks. This similarity indicated the potential existence of the spurious relationship since people are also difficult to identify these two tasks. 
In Figure~\ref{fig:toy_rela} (b), we utilize the dimensionality reduction method to visualize the inner state-action pairs of each dataset. The results show that the test dataset may not be close to the dataset $D_1^{\text{train}}$ within the same task. 
To further delineate the data properties, we compute the relative representation metrics of each training dataset compared with the test dataset, as mentioned in Equation~\eqref{eq:relative-metric} from the main paper. The results in Figure~\ref{fig:toy_rela} (c) confirm that the distance within the same task tends to be smaller, showing that it is feasible for task representations to identify different tasks.

k

\subsection{Baseline Algorithms}
\label{app:baselines}

We present introductions to the baselines from recent works, including FOCAL \cite{focal} and CORRO \cite{corro} to further illustrate their and our contributions. 

\textbf{FOCAL}. FOCAL is a pioneering algorithm that introduces the direction of offline meta-reinforcement learning. Its primary objective is to develop a metric that effectively distinguishes the offline dataset across different tasks. FOCAL employs a contrastive learning objective to learn effective representation from established positive and negative data.

\textbf{CORRO}. CORRO aims to train a robust context encoder using a bi-level structure. It incorporates noise as negative label data into the low-level encoder to enhance its resilience. Then CORRO trained robust encoders through contrastive learning. CORRO utilizes a transformer-based structure as the high-level context encoder to extract task features. However, the inclusion of noisy negative data, though enhancing robustness, restricts the context encoder within the confines of the data distribution. This solution may degrade generalization and exacerbate spurious relationships between task features and behavior policies.

For FOCAL and CORRO, we used the implementation from their codebase in \url{https://github.com/LanqingLi1993/FOCAL-ICLR} and \url{https://github.com/PKU-RL/CORRO}. For the task representation learning process, all hyper-parameters are kept the same as the original implementations. Our modification is to replace the dataset with our collected data. However, we found that their original policy learning processes may not be unified, leading to unfair comparisons. Therefore, we unify the meta-policy learning process with the same SAC+BC algorithm as ReDA in experiments.

\begin{figure*} 
\centering
\includegraphics[width=1\linewidth]{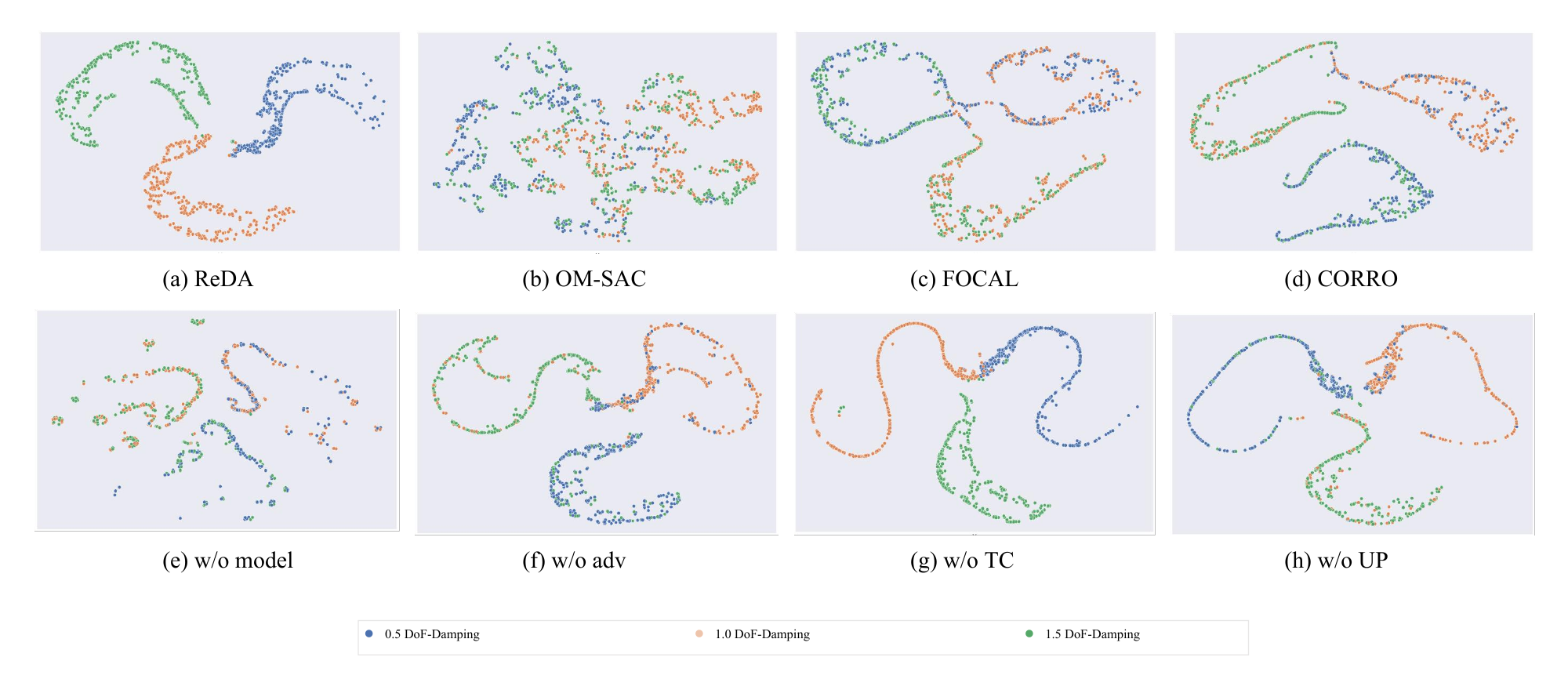}
\vspace{-3mm}
\caption{Visualization with t-SNE dimensionality reduction on task representations of OM-SAC, CORRO, FOCAL, ReDA w/o model, w/o adv, w/o TC, and w/o UP in the task set \texttt{Walker2d-Dof-Damping-1}. Different tasks have distinct colors.}
\label{fig:total}
\vspace{-2mm}
\end{figure*}

\subsection{Implementation Details of ReDA}

\textbf{Model Learning.  }We adopt a neural network with $3$ hidden layers and a hidden dimension of $200$ for each layer. The learning scheme is similar to MOPO. Batch size, learning rate, and ensemble size are $2048$, $1e-3$ and $3$, respectively. As there are $3$ training tasks for each run, the total number of dynamics models we trained is $9$. All the modules are optimized with Adam optimizer, with the learning rate and batch size equal to $3\times 10^{-4}$ and $256$. 

\noindent \textbf{Context Encoder and Policy Learning.  }The context encoder is a Transformer network with an embedding dimension of $64$, one layer, and one attention head. The output representations are then fed into a simple linear layer to produce the task feature with the dimension of $8$. For policy learning, the actor and critic are both parameterized by neural networks with two hideen layers of $256$  dimensions.

\noindent \textbf{Training Procedure.  } For all tasks, the total number of training epochs is set to $1000$. Within each epoch, we perform $1000$ steps of gradient updates. We set the hyper-parameter $\lambda_1$ and $\lambda_2$ to $1.0$ for all experiments. When performing branch rollout on dynamics models, the batch size is set to $2000$ for each model to generate data. 

\subsection{Computation Cost}

We conduct our experiments on $4$ workstations. Each of those is equipped with 32-core CPUs and 2 NVIDIA 2080Ti GPUs. The training time for each task is about $20$-$24$ hours.

\begin{figure*} 
\centering
\includegraphics[width=1\linewidth]{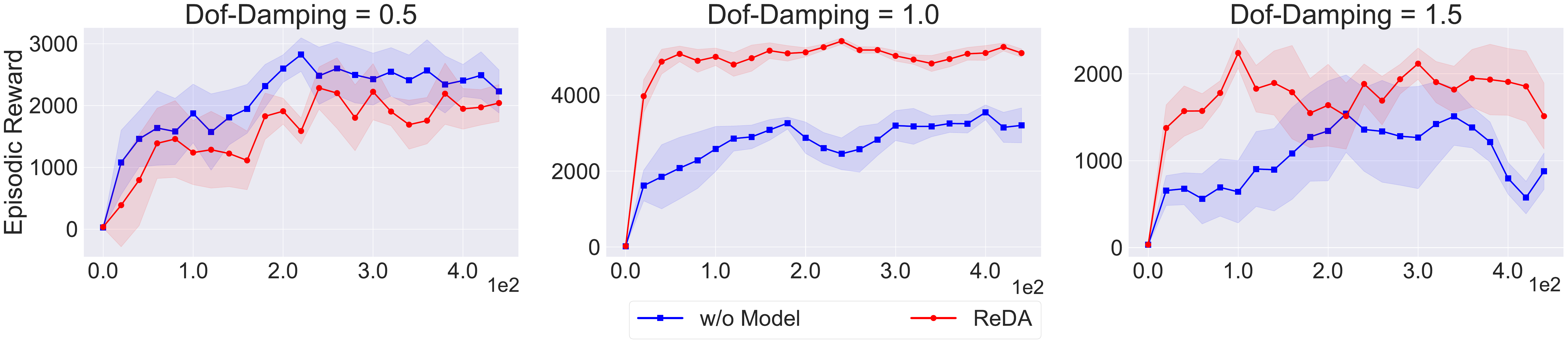}
\vspace{-3mm}
\caption{Average episodic rewards of ReDA and ReDA w/o model on the each training (seen) task from \texttt{Walker2d-Dof-Damping}. The X-axis is the number of epochs and the Y-axis is the episodic reward. Results are averaged over $5$ seeds.}
\label{fig:seen}
\vspace{-2mm}
\end{figure*}

\begin{figure*} 
\centering
\includegraphics[width=0.7\linewidth]{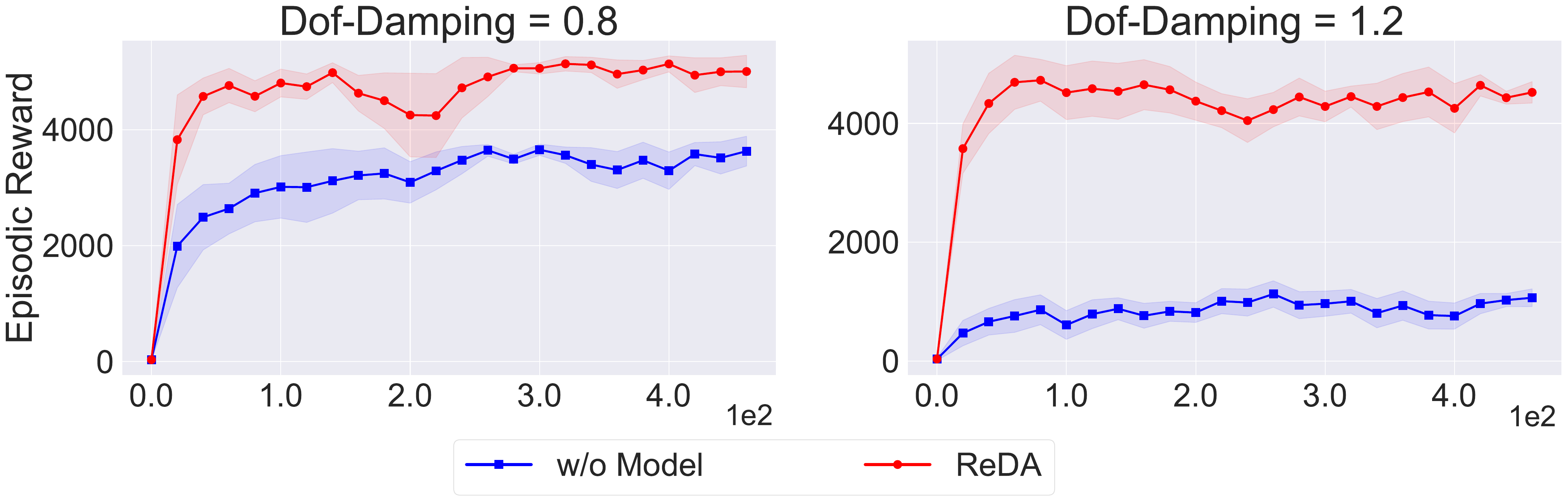}
\vspace{-3mm}
\caption{Average episodic rewards of ReDA and ReDA w/o model on each test (unseen) task from \texttt{Walker2d-Dof-Damping}. The X-axis is the number of epochs and the Y-axis is the episodic reward. Results are averaged over $5$ seeds.}
\label{fig:unseen}
\vspace{-2mm}
\end{figure*}

\subsection{Additional Visualization Results}

We also provide more visual results of the representation of the baselines and ablations in the task of Walker2d Dof-Damping $1$ in Figure~\ref{fig:total}. In the task of Walker2d Dof-Damping $1$, we conducted experiments comparing different baselines and ablations. We visually analyzed the results to gain further insights.
We observed that the baselines without model data augmentation (including baselines, ReDA w/o model) exhibited significantly poorer generalization performance on the test dataset compared to those with model data augmentation. This finding suggests that incorporating data augmentation techniques improved the generalization ability of the models.
However, we also noticed that when using model data augmentation with random augmentation techniques, there were instances where the introduced data had a detrimental effect on the generalization of the context encoder. This indicates that careful consideration and selection of data augmentation strategies are crucial to ensure beneficial effects on the overall performance.
Regarding the reward design, we found that even without incorporating uncertainty, the context encoder still struggled to handle certain testing scenarios. This suggests that uncertainty-aware reward design is essential for robust performance. Additionally, we observed that the absence of the task reward led to some corner cases where the model performance was compromised. This highlights the importance of incorporating task-specific rewards to guide the learning process effectively.
In summary, our visual analysis of the baselines and ablations in the Walker2d Dof-Damping $1$ task revealed that data augmentation, appropriate reward design, and task-specific rewards are all crucial factors in improving the generalization and performance of the context encoder models.


\subsection{Additional Results of Ablation Studies}

The t-SNE results demonstrate that the data generated from the learned model significantly improves task representation learning. To further validate the effectiveness of the learned model, we present curves for each task on the \texttt{Walker2d-Dof-Damping-1} task set and compare our results with the ReDA variant w/o model. In Figure~\ref{fig:seen}, we present the performance curves of three training tasks. The ReDA variant w/o model, which lacks the task identification ability as shown in the visualization, struggles to recognize different tasks but may only treat them as a hybrid task. Consequently, it exhibits conservative performance only in specific tasks from the training set. For example, it achieves relatively good performance in the task with a damping value of $0.5$ while failing to perform well on the other tasks. In contrast, ReDA performs well on all training tasks.
Moving on to the unseen tasks depicted in Figure~\ref{fig:unseen}, ReDA significantly outperforms the variant w/o model. This improvement can be ascribed to ReDA's capability to recognize and adapt to different tasks effectively. ReDA w/o model still has a less promising performance in unseen tasks.


\begin{table}
\caption{Ablations on different numbers of dynamics models. All results are averaged over $5$ seeds. }
\label{tab:abldetails}
\begin{tabular}{lrrr}\toprule
Task Type & 1 model & 3 models & 5 models \\ \midrule
Walker2d Gravity-1 (on-policy protocol) & 1466.8 & 1720.8 & 1743.1 \\
Walker2d Gravity-1 (off-policy protocol) & 1254.5 & 1349.8 & 1376.5 \\
\bottomrule
\end{tabular}
\end{table}

\subsection{Additional Results on Dynamics Models}

We also investigate the effects of ensemble size of the learned dynamics models in our method. We typically train 3 dynamics models in our experiments. To evaluate its sensitivity when the number changes, we also conduct experiments on \texttt{Walker2d Gravity-1} with 1/3/5 dynamics models. The average scores of both the on-policy protocol and the off-policy protocol are shown in Table~\ref{tab:abldetails}. We find our selection of 3 models can be a balance between performance and computation cost. 


\end{document}